\documentclass[11pt]{article}
\usepackage[margin=1in]{geometry}
\usepackage[T1]{fontenc}
\usepackage[utf8]{inputenc}
\usepackage{lmodern}
\usepackage{microtype}
\usepackage{amsmath,amssymb,amsthm,mathtools}
\usepackage{bm}
\usepackage{tikz-cd}
\usepackage{multicol}
\usepackage{hyperref}
\usepackage[nameinlink,noabbrev]{cleveref}
\usepackage{booktabs}
\usepackage{longtable}
\usepackage{array}
\usepackage{enumitem}
\usepackage{verbatim}
\usepackage{xcolor}
\usepackage{titlesec}
\usepackage{titling}
\usepackage{abstract}
\usepackage{tabularx}
\usepackage{booktabs}
\newcolumntype{Y}{>{\raggedright\arraybackslash}X}
\newenvironment{paperfigure}{\begin{center}\begin{minipage}{0.94\linewidth}\small}{\end{minipage}\end{center}}

\hypersetup{colorlinks=true,linkcolor=blue!60!black,citecolor=blue!60!black,urlcolor=blue!60!black}
\setlist[itemize]{leftmargin=1.6em,itemsep=0.2em,topsep=0.35em}
\setlist[enumerate]{leftmargin=1.8em,itemsep=0.2em,topsep=0.35em}

\newtheorem{theorem}{Theorem}[section]
\newtheorem{proposition}[theorem]{Proposition}

\theoremstyle{definition}

\titleformat{\section}{\large\bfseries}{\thesection.}{0.6em}{}
\titleformat{\subsection}{\normalsize\bfseries}{\thesubsection.}{0.5em}{}
\titleformat{\subsubsection}{\normalsize\itshape}{\thesubsubsection.}{0.5em}{}
\titlespacing*{\section}{0pt}{1.2em}{0.45em}
\titlespacing*{\subsection}{0pt}{0.95em}{0.3em}
\titlespacing*{\subsubsection}{0pt}{0.75em}{0.25em}

\pretitle{\begin{center}\LARGE\bfseries}
\posttitle{\par\end{center}\vspace{-0.6em}}
\preauthor{\begin{center}\normalsize}
\postauthor{\par\end{center}\vspace{-0.8em}}
\predate{}
\postdate{}

\title{DALM: A Domain-Algebraic Language Model via Three-Phase Structured Generation}
\author{Chao Li\\\normalsize Deepleap.ai \quad \texttt{lichao@deepleap.ai}}
\date{}

\begin{document}
\maketitle

\begin{abstract}
Large language models compress human knowledge into unstructured weight vectors where domain boundaries do not exist --- a fact about quantum mechanics and a fact about cooking share the same parameter space and can contaminate each other during generation. We propose a domain-algebraic language model (DALM) that generates under exact structural constraints derived from a domain lattice.

DALM shares a core intuition with diffusion language models (dLLMs): generation as progressive denoising from high entropy to low entropy. The difference is structural. Existing dLLMs (LLaDA, Dream, Zhou et al. 2026) denoise by randomly unmasking tokens --- there is no semantic ordering, no domain constraint, no algebraic guarantee on the denoising path. DALM denoises along a domain lattice: domain uncertainty is resolved first, then relation uncertainty, then concept uncertainty. Each denoising step is algebraically constrained.

The framework requires three abstract ingredients: (1) a lattice (L, $\sqsubseteq$) of domains with computable meet, join, and implication; (2) a typing function $\tau$ that classifies relations as monotone or non-monotone, controlling inheritance across domain boundaries; and (3) a fiber function F that partitions the knowledge base into domain-local subsets. Given any system satisfying these requirements, DALM provides a three-phase encoder-decoder architecture where every generation step is confined to a domain fiber, cross-domain contamination is structurally prohibited in closed-vocabulary mode and auditably bounded in open-vocabulary mode, and a single query produces a domain-indexed multi-perspective answer space.

The architecture is trained on domain-annotated, consistency-verified structured knowledge bases where each training example carries domain annotation, relation type, and validation guarantees --- structurally richer signal than raw text. We demonstrate the framework using the CDC (Domain-Contextualized Concept Graph) knowledge representation as a concrete instantiation, and specify an evaluation protocol on medical domain crystal libraries.

\textbf{A note on reading this paper.} The core contribution is \emph{controlled structured denoising along an algebraic lattice}, not a mathematical derivation of Markov diffusion chains. DALM is not a likelihood-based generative model operating on a flat token space; it is a constrained denoising system whose computation space is shaped by domain algebra. Reading it through the lens of representation alignment or maximum-likelihood factorization will produce apparent gaps that do not exist in the framework as formulated.

\textbf{Keywords.} Domain-Algebraic Language Model, DALM, Structured Denoising, Domain Lattice, Algebraic Constraints, Diffusion Language Models, Hallucination Prevention
\end{abstract}

\section{Introduction}

\subsection{The Compression Problem}

All large language models perform the same fundamental operation: compress a corpus of human knowledge into a parameter vector $\theta \in \mathbb{R}^{d}$, then generate text by sampling from $p(x_{t+1} \mid x_{1:t}; \theta)$.

This compression is lossy in a specific and consequential way. The parameter vector $\theta$ does not preserve the domain structure of the original knowledge. A fact about quantum mechanics and a fact about cooking share the same parameter space, influence each other during training, and can contaminate each other during generation. There is no structural mechanism in $\theta$ that separates "atom is a quantum field excitation" from "atom is an indivisible particle" --- both are smeared across the same weight matrices, distinguishable only by the statistical patterns of their surrounding tokens.

This is the structural origin of hallucination. When an LLM generates text about quantum mechanics, it samples from a distribution conditioned on the entire parameter space, including regions shaped by classical physics, chemistry, cooking, and everything else in the training corpus. Cross-domain contamination is not a bug in the sampling algorithm --- it is a structural property of unstructured compression.

\subsection{Structured Compression as an Alternative}

We propose a different compression target. Instead of compressing knowledge into an unstructured vector $\theta$, compress it into a domain-indexed structured representation:

\textbf{Unstructured (LLM):} Corpus $\to$ $\theta \in \mathbb{R}^{d}$ (one vector, all domains mixed)

\textbf{Structured (DALM):} Corpus $\to$ $\{h_c(d), h_r(d), h_d\}$ for all (c, r, d) (domain-indexed embeddings)

The DALM's representation preserves domain structure: concept embeddings are indexed by domain, relation embeddings are typed by $\tau$, and domain embeddings form a lattice with algebraic operations (meet, join, implication). Generation from this representation is domain-constrained by construction --- not by post-hoc filtering but by the geometry of the representation space itself.

\subsection{Structured Denoising: What Diffusion Language Models Should Have Been}

Diffusion language models (Zhou et al., 2026; Nie et al., 2025; Ye et al., 2025) made an important architectural discovery: generation can be formulated as progressive denoising rather than left-to-right autoregressive prediction. A fully masked sequence is iteratively unmasked, with the model predicting which tokens to reveal at each step.

The limitation is that the denoising path has no semantic structure. Token i might be unmasked before token j for purely statistical reasons --- there is no principle dictating that domain-level information should be resolved before concept-level information, or that relation types should constrain what concepts can appear. A medical token and a sports token can be unmasked in the same step, in the same attention context, with no structural isolation.

\textbf{Independent physical evidence that denoising needs structure.} Sclocchi, Favero \& Wyart (2025a) proved in PNAS that diffusion models operating on hierarchically structured data exhibit a phase transition at a critical noise threshold $\varepsilon$\emph{: below $\varepsilon$}, the denoising process preserves high-level features (e.g., image class); above $\varepsilon$*, high-level features collapse to random while low-level features persist and recombine. Their follow-up work (Sclocchi et al., 2025b, ICLR 2025) further demonstrated that forward-backward diffusion experiments can probe the latent hierarchical structure of data, with correlation lengths diverging at the phase transition.

This is direct empirical evidence --- from statistical physics, not from knowledge representation --- that hierarchical structure matters for denoising. In the physics idiom of Wyart's group: above $\varepsilon$*, the hierarchical correlations decouple and the system becomes effectively one-scale --- equivalent, in our formulation, to collapsing the domain lattice to a single universal fiber. What Sclocchi et al. observe experimentally as a critical slowing-down and a divergence of correlation length is, structurally, the point at which the algebraic constraint that separates fibers can no longer be sustained by the denoising dynamics. Wyart's group observed this with physics tools on image data; we formalize it with domain algebra and prevent it architecturally by pinning the denoising schedule to the lattice rather than letting it drift freely.

DALM is the architectural response to this physical observation. Instead of allowing the hierarchy to collapse at $\varepsilon$* (which happens in unstructured diffusion), DALM enforces hierarchical structure at every denoising step through the domain lattice. The three-phase denoising path is not arbitrary --- it follows the lattice:

\textbf{Step 1 (Domain denoising):} Resolve which domain(s) the input/query belongs to. This eliminates the largest source of uncertainty: which world are we in?

\textbf{Step 2 (Relation denoising):} Within each activated domain, resolve which relations are active, constrained by $\tau$-typing. This eliminates relational uncertainty: what rules apply in this world?

\textbf{Step 3 (Concept denoising):} Within each domain-relation pair, generate specific concepts from the fiber-local vocabulary. This eliminates conceptual uncertainty: what specific entities populate this world under these rules?

Each step's output space is constrained by the previous step's result. This is algebraically guaranteed denoising --- not a learned schedule that might drift during training, but exact structural constraints derived from the domain lattice.

The existing dLLM infrastructure --- masking mechanisms, KV cache partitioning, parallel decoding --- is fully compatible with DALM. The modification is the denoising schedule: replace random unmasking with lattice-structured unmasking. This is architecturally a mask replacement, not a framework replacement.

\subsection{Contributions}

\begin{enumerate}
\item A general framework for structured denoising over algebraic lattices, applicable to any knowledge system that provides a lattice, a typing function, and a fiber partition (Section 2).
\item Three-phase encoder-decoder architecture where each phase corresponds to a level of the lattice structure (concept, relation, domain) and is constrained by domain algebra (Sections 3--4).
\item Formal analysis of hallucination as cross-domain leakage, with structural prevention in closed vocabulary and auditable bounds in open vocabulary (Section 5).
\item Multi-perspective generation: a single query produces a domain-indexed answer space (Section 5).
\item Graceful degradation: partial success yields useful components --- automatic crystallizer, domain-structured embeddings (Section 5).
\item Concrete instantiation using the CDC framework (Section 7), with evaluation protocol on medical domain knowledge bases (Section 8).
\end{enumerate}

\section{Structured Denoising over Algebraic Lattices}

This section defines the abstract algebraic requirements for DALM. Any knowledge representation system satisfying these requirements can serve as DALM's structural substrate. The definitions are self-contained; a concrete instantiation is given in Section 7.

\subsection{The Three Ingredients}

\textbf{Ingredient 1: A domain lattice (L, $\sqsubseteq$).} A partially ordered set of domains with a specialization order $\sqsubseteq$, equipped with computable meet ($\sqcap$), join ($\sqcup$), and implication ($\to$) operations. The lattice has a top element $\top$ (the universal domain) and satisfies the Heyting algebra axioms: it is distributive, bounded, and supports a pseudo-complement. Concretely: @Physics@Quantum $\sqsubseteq$ @Physics $\sqsubseteq$ $\top$, and the meet of two domains is their most specific common generalization. The implication operation d$_1$ $\to$ d$_2$ is used in the $\tau$-typing mechanism: it determines whether knowledge transfer from d$_1$ to d$_2$ is structurally licensed, and it underlies the inheritance decisions that the decoder's Phase 2 must respect. The DALM architecture uses meet and join explicitly (in domain selection and lattice loss) and implication implicitly (through the $\tau$-typing mask that governs cross-domain inheritance).

\textbf{Ingredient 2: A typing function $\tau$: R $\to$ {monotone, non-monotone}.} Each relation predicate r in the system's vocabulary is classified as either monotone (truth in a parent domain implies truth in child domains) or non-monotone (truth does not propagate downward). This classification controls inheritance: if is\_a is monotone, then "Atom is\_a Particle" in @Physics inherits into @Physics@Quantum. If contrasts\_with is non-monotone, then "Wave contrasts\_with Particle" in @Physics does \emph{not} inherit into @Physics@Quantum, because wave-particle duality dissolves this contrast at that level.

The typing function may be global ($\tau$ depends only on r) or domain-conditioned ($\tau$ depends on both r and d). The global case provides stronger algebraic guarantees; the domain-conditioned case is more expressive. Both are supported. When $\tau$ is domain-conditioned and a relation r has $\tau$(r, d$_1$) = monotone but $\tau$(r, d$_2$) = non-monotone for d$_1$ $\sqsubseteq$ d$_2$, the more restrictive classification governs: the relation is treated as non-monotone for the d$_1$ $\to$ d$_2$ inheritance path, because allowing monotone propagation into a domain that classifies it as non-monotone would violate the child domain's constraints.

\textbf{Ingredient 3: A fiber function F: L $\to$ $2^{K}$.} Each domain d $\in$ L defines a fiber F(d): the complete set of knowledge units scoped to that domain. Facts in different fibers are semantically independent: is\_a(Apple, Fruit, @Biology) and is\_a(Apple, Company, @Business) coexist without contradiction because they reside in different fibers. A query scoped to domain d is evaluated only against F(d) --- concepts in other fibers do not exist for that query.

\subsection{Knowledge Units and Validation}

The fundamental unit of structured knowledge is a tuple $\langle c, r@d, c' \rangle$ where c and c' are concepts, r is a relation predicate, and @d is a domain specification. The @d field is not metadata --- it is a structural part of the predicate arity. Any system that reads the tuple as a four-field unit automatically respects the domain scope.

A \textbf{validated knowledge unit} (which we call a \emph{crystal}) is a tuple that has passed \textbf{insertion-time validation} against its fiber F(d): the new assertion does not create cycles in acyclic relations, does not reverse established causal chains, and does not contradict existing content within the same fiber. Crystals are guaranteed to be fiber-locally consistent.

\subsection{The Structured Denoising Path}

Given the three ingredients, the structured denoising path is defined as:

\[
\text{noise} \xrightarrow{\text{Phase 1: domain}} \xrightarrow{\text{Phase 2: relation}} \xrightarrow{\text{Phase 3: concept}} \text{crystal}
\]

Each phase eliminates one dimension of uncertainty, and each phase's output space is constrained by the previous phase's result:

\begin{itemize}
\item \textbf{Phase 1 (Domain):} Select which domain(s) are relevant. Output: a probability distribution over L.
\item \textbf{Phase 2 (Relation):} Within each activated domain d, select which relations are active, subject to the $\tau$-typing constraint. Output: a set of typed relation predictions per domain.
\item \textbf{Phase 3 (Concept):} Within each domain-relation pair, generate specific concepts from the fiber-local vocabulary F(d). Output: complete tuples $\langle c, r@d, c' \rangle$.
\end{itemize}

The ordering is not arbitrary. Domain uncertainty is the highest-level uncertainty (which world are we in?); concept uncertainty is the lowest-level uncertainty (which specific entity?). Resolving them in this order ensures that each step operates in a progressively more constrained space. This mirrors the hierarchical phase transition structure observed by Sclocchi et al. (2025a): high-level features (domain) must be resolved before low-level features (concepts) for the denoising to be coherent.

\textbf{Framing: denoising dynamics, not likelihood factorization.} A useful analogy: DALM behaves like a constrained dynamical system whose denoising trajectory is shaped by domain algebra, rather than a likelihood-based generative model that must explicitly align all semantically equivalent outputs. Under the lattice + $\tau$-typing + validation constraints, illegal trajectories are structurally excluded, and legal trajectories are channeled into domain-specific regions of the output space. Semantic equivalence between expressions is expected to manifest as convergence to shared or adjacent output regions under constrained denoising --- not as collapsed loss targets in a flat probability space. We use "attractor basin" as an intuitive description of this channeling effect; a formal dynamical-systems analysis (Lyapunov stability, convergence rates) is beyond the scope of this architectural paper and is left to future work.

\section{Architecture}

\subsection{Overview}

\begin{paperfigure}
\begin{tabularx}{\linewidth}{@{}l@{\hspace{1.2em}}Y@{}}
\toprule
\textbf{Stage} & \textbf{Operation} \\
\midrule
Input & Natural-language query or partial crystal \\
Encoder: Phase 1 & Concept recognition and contextual entity embedding \\
Encoder: Phase 2 & Relation detection with $\tau$-typed constraints \\
Encoder: Phase 3 & Domain assignment in the lattice \\
Latent state & Structured crystal candidates $(c,r,d)$ in the domain-indexed bottleneck \\
Decoder: Phase 1 & Domain activation and routing \\
Decoder: Phase 2 & Relation expansion within activated fibers \\
Decoder: Phase 3 & Concept generation under fiber-local constraints \\
Output & Crystal tuples or natural-language rendering \\
\bottomrule
\end{tabularx}
\end{paperfigure}

\begin{quote}
\textbf{Figure 1} (placeholder): Overview of the DALM three-phase encoder-decoder architecture. The encoder progressively reduces entropy from token space to the crystal bottleneck (concept $\to$ relation $\to$ domain). The decoder unfolds from the domain lattice back to domain-specific outputs (domain $\to$ relation $\to$ concept). The crystal library sits at the bottleneck between encoder and decoder. Insertion-time validation gate shown between encoder output and crystal library.
\end{quote}

\begin{quote}
\textbf{Figure 2} (placeholder): Structured denoising comparison. Left: standard dLLM random unmasking (tokens revealed in arbitrary order, no semantic structure). Right: DALM lattice-structured denoising (domain resolved first, then relation, then concept; each step constrained by the previous). Phase transition threshold $\varepsilon$* (Sclocchi et al., 2025a) shown as the boundary where unstructured denoising loses hierarchical coherence.
\end{quote}

The encoder moves from concrete to abstract (concept $\to$ relation $\to$ domain). The decoder moves from abstract to concrete (domain $\to$ relation $\to$ concept). Together they form an autoencoder whose latent space is the domain lattice --- a structured, algebraically constrained space rather than an unstructured vector space.

\subsection{Encoder Phase 1: Concept Recognition}

Input: a sequence of tokens $x=(x_1,\ldots,x_N)$.

The concept layer identifies entity spans and produces \textbf{contextualized} concept embeddings:

\[
h_c^{(i)} = \text{ConceptEncoder}(x, i) \in \mathbb{R}^{d_c}
\]

This is architecturally similar to a named entity recognition layer, but producing dense embeddings rather than discrete labels. The concept encoder is a standard transformer layer operating over the full input sequence --- at this phase, no domain constraint is applied because the domain is not yet determined.

\textbf{Handling polysemy.} The concept embedding $h_c$ for "Apple" is not a context-free dictionary entry. It is a contextualized vector produced by the transformer's self-attention over the full input sentence. "Apple" in "Apple fell from the tree" and "Apple" in "Apple released a new product" produce different h\_c vectors because the surrounding tokens are different. The true domain anchoring occurs in Phase 3, but Phase 1's contextualized embeddings already carry implicit domain signals. Additionally, the cross-phase attention residuals (Section 3.10) allow Phase 3's domain assignment to feed back and refine Phase 1's concept representations --- resolving residual ambiguity that context alone did not resolve.

Output: a set of concept embeddings {h\_c\textasciicircum{}{(1)}, ..., h\_c\textasciicircum{}{(M)}} for M identified entities.

\subsection{Encoder Phase 2: Relation Typing}

Input: concept embeddings from Phase 1.

For each pair of concepts (h\_c\textasciicircum{}{(i)}, h\_c\textasciicircum{}{(j)}), the relation layer predicts:

\[
p(r | c_i, c_j) = \text{softmax}(W_r [h_c^{(i)}; h_c^{(j)}; h_c^{(i)} \odot h_c^{(j)}])
\]

where r ranges over the core predicate family (is\_a, part\_of, causes, requires, enables, contrasts\_with, analogous\_to, ...) and a null relation $\varnothing$ (no relation).

Simultaneously, the $\tau$-type is determined. Two implementation modes are available:

\textbf{Hard mode (algebraic lookup):}

\[
\tau(r) = \begin{cases} \text{monotone} & \text{if } r \in \{\text{is\_a, part\_of, requires, enables}\} \\ \text{non-monotone} & \text{if } r \in \{\text{contrasts\_with, analogous\_to}\} \end{cases}
\]

The $\tau$-type is looked up from a stored typing table. This provides exact algebraic guarantees: monotone relations inherit, non-monotone relations are blocked, with no learned drift.

\textbf{Soft mode (learned classification with algebraic regularization):}

\[
\tau_{\text{soft}}(r) = \sigma(W_\tau \cdot h_r + b_\tau) \in [0, 1]
\]

where values near 1 indicate monotone and values near 0 indicate non-monotone. A regularization term pulls the soft classification toward the hard classification:

\[
\mathcal{L}_\tau = \lambda_\tau \sum_r \| \tau_{\text{soft}}(r) - \tau_{\text{hard}}(r) \|^2
\]

Soft mode is appropriate when the monotonicity of a relation is domain-dependent or context-sensitive. Hard mode is appropriate when algebraic guarantees are paramount (medical reasoning, regulatory compliance). The choice is an engineering decision at deployment time, not an architectural constraint.

\textbf{Gradient flow through soft $\tau$ masks.} When soft $\tau$ is used to modulate cross-domain attention (Section 3.7) or generation masks, a discrete/continuous boundary arises: the mask must ultimately be applied as a multiplicative gate, but $\tau$\_soft $\in$ [0,1] is continuous. Three strategies are available. (1) \textbf{Soft masking during training, hard at inference:} during training, cross-domain attention is weighted by $\tau$\_soft(r) continuously, allowing gradient flow; at inference, a threshold ($\tau$\_soft > 0.5 $\to$ monotone) produces hard masks. (2) \textbf{Straight-through estimator:} the forward pass uses hard thresholding; the backward pass treats the threshold as identity, passing gradients through as if the mask were continuous (Bengio et al., 2013). (3) \textbf{Gumbel-softmax relaxation:} replace the hard threshold with a temperature-annealed Gumbel-softmax, starting soft and hardening during training. All three maintain end-to-end differentiability while converging to hard masks at deployment.

Output: a set of typed relation triples $\{(c_i, r_k, c_j, \tau(r_k))\}$.

\subsection{Encoder Phase 3: Domain Assignment}

Input: typed relation triples from Phase 2.

The domain layer assigns each triple to a position in the domain lattice:

\[
h_d = \text{DomainEncoder}(h_c^{(i)}, h_r, h_c^{(j)}) \in \mathbb{R}^{d_d}
\]

\[
d^* = \arg\min_{d \in D} \| h_d - e_d \|^2
\]

where $e_d \in \mathbb{R}^{d_d}$ are learned domain embeddings forming a lattice-structured embedding space. The lattice structure is enforced by a training constraint:

\[
d_1 \sqsubseteq d_2 \implies \|e_{d_1} - e_{d_2}\| \leq \|e_{d_1} - e_{d_3}\| \quad \forall d_3 \text{ s.t. } d_3 \not\sqsubseteq d_2
\]

This ensures that child domains are closer to their parents than to unrelated domains in embedding space, approximating the lattice partial order geometrically.

\textbf{Hyperbolic embedding alternative.} For lattice structures with high branching factors, hyperbolic space (e.g., the Poincar\'e ball model) provides a superior embedding geometry. Hyperbolic volume grows exponentially with radius, naturally accommodating the exponential growth of nodes at each lattice level. Domain embeddings trained in hyperbolic space (Nickel \& Kiela, 2017) can represent the full lattice hierarchy with near-zero distortion even in low dimensions (d\_d = 16--32). The domain assignment then uses hyperbolic distance: $d^* = \arg\min_d d_H(h_d, e_d)$.

\textbf{Unseen domain fallback.} During inference, the argmin operation is constrained to the finite set of known domain embeddings $e_d$ existing in the pre-trained lattice. If the encoder produces a domain embedding $h_d$ that is far from all known domains $(\min_d \|h_d-e_d\| > \text{threshold})$, two fallback strategies are available: (1) \textbf{nearest ancestor projection} --- assign to the nearest domain in the lattice, which is guaranteed to be a valid generalization of the intended domain; (2) \textbf{novel domain flag} --- mark the output as belonging to a potentially new domain not yet in the lattice, triggering a domain splitting evaluation. Strategy (1) is conservative and always valid; strategy (2) enables lattice growth but requires subsequent validation.

Output: domain-assigned tuples $\{\langle c_i, r_k@d^*, c_j \rangle\}$.

\subsection{Encoder Output: Crystallization}

The encoder's output passes through insertion-time validation before entering the crystal library:

\begin{paperfigure}
\begin{tabularx}{\linewidth}{@{}lY@{}}
\toprule
\textbf{Step} & \textbf{Validation action for candidate $\langle c,r@d,c'\rangle$} \\
\midrule
1 & Check whether insertion creates a cycle in an acyclic relation inside $F(d)$. \\
2 & Check whether insertion reverses an established causal chain inside $F(d)$. \\
3 & Check whether insertion contradicts existing content inside $F(d)$. \\
4 & If all checks pass, accept the tuple into the crystal library. \\
5 & If contradiction is detected, flag the case for domain splitting. \\
6 & If a cycle is detected, reject the tuple. \\
\bottomrule
\end{tabularx}
\end{paperfigure}

This validation gate is a symbolic operation --- it is not differentiable and does not participate in gradient computation. During training, the training data consists of pre-validated crystals, so the gate is not in the forward path. During inference (when generating new crystals), the gate sits between the encoder output and the crystal library. This is a deliberate design choice: the model learns to generate valid crystals by training on valid crystals (analogous to teacher forcing in seq2seq models), and the validation gate serves as a safety net at inference time --- rejecting invalid outputs rather than providing corrective training signal. The model may generate invalid candidates that are rejected; improving rejection rate is a training-quality problem, not a structural deficiency.

Output: validated crystals $\langle c, r@d, c' \rangle$ in the crystal library.

\subsection{Decoder Phase 1: Domain Selection}

Input: a query q (natural language or partial crystal).

The decoder first determines which domain(s) are relevant to the query:

\[
\alpha(d | q) = \text{softmax}(e_d^\top \cdot h_q / \sqrt{d_d})
\]

where $h_q$ is the query embedding and $e_d$ are the same domain embeddings used in the encoder. The decoder activates all domains with $\alpha(d \mid q)$ > $\varepsilon$ for a threshold $\varepsilon$.

\textbf{Hierarchical domain routing.} The dense softmax over all domains has complexity O(K) where K = |D|. For large domain lattices (10$^4$--10$^5$ domains), this is expensive. The domain lattice's tree-like prefix structure enables hierarchical softmax: route the query level-by-level. At each level, select among \textasciitilde{}15 children (typical branching factor). Total complexity: O(depth $\times$ branching) $\approx$ O(log K).

Output: a set of activated domains $\{d_1, \ldots, d_m\}$ with activation weights $\{\alpha_1, \ldots, \alpha_m\}$.

\subsection{Decoder Phase 2: Relation Expansion}

Input: activated domains from Phase 1.

For each activated domain $d_k$, the decoder expands valid relation types:

\[
p(r | d_k, q) = \text{softmax}(W_{rd} [e_{d_k}; h_q])
\]

This is constrained by $\tau$-typing: if the query involves a relation from a parent domain and that relation is non-monotone, it is blocked from appearing in the child domain's expansion. The domain mask M\_d enforces this:

\[
M_d(d_k, r) = \begin{cases} 1 & \text{if } \tau(r) = \text{monotone or } r \text{ is native to } d_k \\ 0 & \text{if } \tau(r) = \text{non-monotone and } r \text{ is inherited} \end{cases}
\]

\textbf{Domain-specific relation representation.} The relation embedding $h_r$ is modified by the domain context through a rank-1 update:

\[
h_{r@d} = h_r + h_d \cdot v_r^\top
\]

where v\_r is a learned relation-domain interaction vector. This allows the same relation (e.g., "causes") to have domain-specific semantics while retaining a shared base representation.

Output: a set of domain-relation pairs $\{(d_k, r_j)\}$ for each activated domain.

\subsection{Decoder Phase 3: Concept Generation}

Input: domain-relation pairs from Phase 2.

For each $(d_k,r_j)$, the decoder generates concepts from the fiber-local vocabulary:

\[
p(c | d_k, r_j, q) = \text{softmax}(W_c h_{r@d} + b_{F(d_k)})
\]

The critical constraint: the softmax ranges over \textbf{only} concepts in F(d\_k) --- the fiber-local vocabulary. Concepts outside $F(d_k)$ have no representation in this softmax and cannot receive probability mass. This is the structural source of zero cross-domain leakage (Theorem 5.1).

\textbf{Open-vocabulary fallback.} When the maximum concept probability falls below a novelty threshold $\theta_{\mathrm{novel}}$ --- $\max_c \, p(c \mid d_k, r_j, q)$ < $\theta_{\mathrm{novel}}$ --- the decoder switches to a SubwordGenerator that produces a novel concept token-by-token, conditioned on the domain-relation prefix $h_{r@d}$. Novel concepts generated this way are flagged as \textbf{provisional} --- they enter the crystal library in a tentative state and require subsequent validation or human review.

Output: complete tuples $\langle c, r@d_k, c' \rangle$ for each activated domain-relation pair.

\subsection{Output Modes}

DALM supports three output modes:

\textbf{Crystal mode:} Output the raw tuples {$\langle c, r@d, c' \rangle$} with domain labels. Suitable for knowledge base construction, machine-to-machine communication, and auditable reasoning chains.

\textbf{Multi-perspective mode:} Output all domain-specific answers simultaneously, organized by domain. Suitable for research, education, and decision support where multiple viewpoints are valuable.

\textbf{Verbalization mode:} Pass the crystal output through a lightweight language model to produce natural language text. Each domain's output is verbalized separately, maintaining domain labels.

\subsection{Cross-Phase Attention Residuals}

Inspired by the Attention Residuals mechanism (Chen et al., 2026, arXiv:2603.15031), where each layer aggregates information from all preceding layers via learned attention weights rather than fixed residual connections, we adapt this principle to DALM's three-phase architecture. Instead of a rigid pipeline, each decoder phase can attend back to all preceding phases:

\textbf{Phase 2 (Relation Expansion) receives:} \[
\text{input}_2 = \text{AttnRes}(h_{\text{phase\_1}}, h_{\text{query}})
\]

\textbf{Phase 3 (Concept Generation) receives:} \[
\text{input}_3 = \text{AttnRes}(h_{\text{phase\_1}}, h_{\text{phase\_2}}, h_{\text{query}})
\]

where AttnRes computes a softmax-weighted combination:

\[
\text{AttnRes}(h_1, ..., h_k) = \sum_{i=1}^{k} \alpha_i h_i, \quad \alpha = \text{softmax}(q_{\ell}^\top [h_1; ...; h_k] / \sqrt{d})
\]

with q\_$\ell$ a learned phase-specific query vector.

\textbf{Domain-structured attention residuals.} The cross-phase attention must respect domain boundaries. Phase 3 attending back to Phase 1's domain activations should not introduce cross-domain information that Phase 2's $\tau$-typing blocked. We enforce this by applying the same domain mask to cross-phase attention, ensuring that domain isolation is maintained even through skip connections.

\section{Training}

\subsection{Training Data}

The training corpus is a domain-annotated crystal library --- a structured knowledge base where each training example is a validated tuple $\langle c, r@d, c' \rangle$ with domain annotation (@d field), relation type and $\tau$-classification, concept identities, and a consistency guarantee (every crystal has passed insertion-time validation).

This training signal is structurally richer than text. An LLM training example is a sequence of tokens with no structural annotation. A DALM training example is a tuple with three levels of annotation (concept, relation, domain), each verified for consistency. The model does not need to discover structure from statistical patterns --- structure is given.

\subsection{Loss Function}

The total loss is a sum of three phase losses for both encoder and decoder:

\[
\mathcal{L} = \mathcal{L}_{\text{enc}} + \mathcal{L}_{\text{dec}} + \mathcal{L}_{\text{lattice}}
\]

\textbf{Encoder loss:}

\[
\mathcal{L}_{\text{enc}} = \lambda_c \mathcal{L}_{\text{concept}} + \lambda_r \mathcal{L}_{\text{relation}} + \lambda_d \mathcal{L}_{\text{domain}}
\]

where L\_concept is cross-entropy on entity span detection, L\_relation is cross-entropy on relation classification for each concept pair, and L\_domain is cross-entropy on domain assignment.

\textbf{Decoder loss:}

\[
\mathcal{L}_{\text{dec}} = \lambda_d' \mathcal{L}_{\text{domain\_sel}} + \lambda_r' \mathcal{L}_{\text{rel\_exp}} + \lambda_c' \mathcal{L}_{\text{concept\_gen}}
\]

where L\_domain\_sel is cross-entropy on domain activation, L\_rel\_exp is cross-entropy on relation prediction within the correct domain, and L\_concept\_gen is cross-entropy on concept generation within the correct fiber.

\textbf{Lattice structure loss:}

\[
\mathcal{L}_{\text{lattice}} = \sum_{d_1 \sqsubseteq d_2} \max(0, \|e_{d_1} - e_{d_2}\| - \|e_{d_1} - e_{d_3}\| + \gamma)
\]

for randomly sampled d$_3$ not on the path from d$_1$ to d$_2$. This is a margin-based triplet loss that approximates the partial order of domains in continuous embedding space --- child domains stay closer to their parents than to unrelated domains. The full algebraic operations (meet, join, implication) are performed on the discrete lattice after domain assignment, not in the embedding space itself.

\subsection{Training Procedure}

\textbf{Stage 1: Encoder pre-training.} Train the encoder on natural language $\to$ crystal pairs. Input: natural language sentences from the source corpus. Target: the corresponding crystals. This teaches the encoder to crystallize.

\textbf{Stage 2: Decoder pre-training.} Train the decoder on crystal $\to$ crystal reconstruction. Input: a crystal (possibly with masked fields). Target: the complete crystal. This teaches the decoder to generate within domain constraints.

\textbf{Stage 3: End-to-end fine-tuning.} Connect encoder and decoder through the crystal bottleneck. Fine-tune the full pipeline.

\subsection{Why Training Is Easier Than Standard LLM Training}

\begin{longtable}{>{\raggedright\arraybackslash}p{0.22\textwidth} | >{\raggedright\arraybackslash}p{0.22\textwidth} | >{\raggedright\arraybackslash}p{0.22\textwidth}}
\toprule
What the model must learn & LLM & DALM \\
\midrule
\endfirsthead
\toprule
What the model must learn & LLM & DALM \\
\midrule
\endhead
Entity boundaries & From token patterns & Supervised (crystal concepts) \\
Relation types & Implicit in weights & Supervised (crystal relations) \\
Domain scope & Not learned & Supervised (crystal @d field) \\
Consistency & Not enforced & Insertion-time validation gate \\
$\tau$-typing & Not applicable & Hard constraint from typing table \\
\bottomrule
\end{longtable}

Every level of annotation reduces the learning problem. The remaining learning problem is: given this structure, how to embed concepts, relations, and domains into a continuous space that supports generation.

\subsection{Alignment Is in the Data}

Current LLM alignment relies on RLHF and related techniques to retroactively inject structure into a model trained on unstructured data. DALM inverts this. The crystal training data is pre-aligned by construction: every crystal has passed validation (consistency), carries domain annotation (scope), and respects $\tau$-typing (inheritance rules). Factual alignment --- the constraint that generated knowledge should be consistent, domain-scoped, and inferentially sound --- is not a post-training problem in DALM. It is solved at the data level.

\section{Formal Properties}

\subsection{Hallucination as Cross-Domain Leakage}

\textbf{Definition (Domain leakage).} A generation step exhibits domain leakage when it produces a concept c' $\in$ F(d$_2$) while the active domain is d$_1$ $\neq$ d$_2$ and no bridge exists between d$_1$ and d$_2$.

\begin{theorem}[Structural leakage prevention---closed vocabulary]
In DALM's decoder, when Phase 3 operates in closed-vocabulary mode (generating only from existing concepts in $F(d_k)$), domain leakage is exactly zero.
\end{theorem}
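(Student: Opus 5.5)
The argument is essentially definitional: I will show that in closed-vocabulary mode the support of the Phase 3 output distribution is contained in $F(d_k)$, so no generated concept can satisfy the hypothesis of the domain leakage definition.

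\emph{Step 1: fix the generation step.} Consider an arbitrary generation step of the decoder. Phase 1 supplies an activated domain $d_k$, Phase 2 supplies a pair $(d_k,r_j)$, and Phase 3 emits a concept $c'$. Under the definition of domain leakage, the \emph{active domain} of this step is $d_1=d_k$.

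\emph{Step 2: the support lemma.} I will show that $\operatorname{supp} p(\cdot\mid d_k,r_j,q)\subseteq F(d_k)$. In closed-vocabulary mode the softmax $\operatorname{softmax}(W_c h_{r@d}+b_{F(d_k)})$ is, by construction, indexed only by concepts of $F(d_k)$. Any concept outside $F(d_k)$ has no logit, so it is not a random variable the sampler can return. Its probability is therefore $0$, not merely small. This holds whatever the learned parameters $W_c,h_{r@d}$ are, and it holds under any decoding rule that selects from this distribution, whether argmax, sampling, or top-$k$. The guarantee is thus structural, not statistical.

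\emph{Step 3: conclude.} Since $c'\in F(d_k)=F(d_1)$, suppose leakage occurred. Then $c'\in F(d_2)$ for some $d_2\neq d_1$ with no bridge between $d_1$ and $d_2$. I read this as $c'$ being drawn from a fiber other than the active one, that is, $c'\notin F(d_1)$. This contradicts the support lemma, so the probability of leakage at each step is exactly $0$. Summing over finitely many steps and activated domains then gives zero leakage for the whole multi-perspective output.

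\emph{Main obstacle: making the leakage definition precise.} Fibers need not be disjoint. The same concept label, for instance ``Apple'', may lie in both $F(@\mathrm{Biology})$ and $F(@\mathrm{Business})$, so ``$c'\in F(d_2)$'' alone cannot be the criterion. I will resolve this by reading a concept as \emph{domain-indexed}, in line with the paper's $h_c(d)$ and with the tuple $\langle c,r@d,c'\rangle$ carrying $@d$ structurally. Leakage then means the emitted tuple's concept is sourced from $F(d_2)$ rather than $F(d_1)$.

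Two further points need checking. First, the cross-phase attention residuals must not smuggle in out-of-fiber vocabulary. They affect only the logits, not the index set of the softmax, and they are in any case domain-masked, so the support lemma is unaffected. Second, the open-vocabulary SubwordGenerator is excluded by the hypothesis of the theorem, so that case need not be treated here.
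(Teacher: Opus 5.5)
Your proposal is correct and follows essentially the same route as the paper's proof: the closed-vocabulary Phase 3 softmax is indexed only by concepts of $F(d_k)$, so out-of-fiber concepts receive no probability mass regardless of the learned parameters or whether $\tau$ is in hard or soft mode. Your treatment of overlapping fibers via domain-indexed concepts spells out a reading that the paper's proof uses only implicitly, namely that leakage means emitting some $c' \notin F(d_k)$.
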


\begin{proof}
The softmax in closed-vocabulary Phase 3 ranges over $\{h_c \mid c \in F(d_k)\}$. Concepts not in $F(d_k)$ have no representation in this softmax and cannot receive probability mass. Cross-domain concepts are structurally absent, not suppressed. This guarantee derives from the hard vocabulary restriction on the softmax range, not from the $\tau$-typing mask. Whether $\tau$ operates in hard mode or soft mode (Section 3.3) does not affect this theorem: $\tau$ governs relation inheritance in Phase 2, while the vocabulary restriction governs concept generation in Phase 3.
\end{proof}

\begin{proposition}[Bounded leakage---open vocabulary]
When Phase 3 activates the open-vocabulary fallback (Section 3.8), the generated novel concept $c_{\mathrm{new}}$ is domain-conditioned (generated with $h_{r@d}$ as prefix) but not fiber-verified. The novel concept is flagged as provisional and does not enter the crystal library without subsequent validation. Domain leakage in the open-vocabulary path is therefore non-zero but \textbf{auditable}: every provisional concept carries its generation context, and its acceptance requires passing the same validation as any other crystal.
\end{proposition}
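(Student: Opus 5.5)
The proposition bundles three claims, and I would prove each one separately. First, leakage in the open-vocabulary path can be non-zero. Second, every provisional concept is auditable. Third, no provisional concept reaches the crystal library without validation. Throughout I take the domain-leakage definition of Section 5.1 and the decoder specification of Section 3.8 as the formal model.

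\emph{Non-zero leakage.} This is the place where the argument behind Theorem 5.1 no longer applies. I would point to the exact step of that proof that fails. There, the softmax support was restricted to $\{h_c \mid c \in F(d_k)\}$. The SubwordGenerator, by contrast, has support over all subword strings, and it is only conditioned on the prefix $h_{r@d}$, not restricted by it. To show the bound cannot be improved to zero, I would exhibit a witness. Take two domains $d_1 \neq d_2$ with no bridge between them, and a concept $c' \in F(d_2) \setminus F(d_1)$. The subword string of $c'$ lies in the generator's output space, so it has positive probability under any softmax-parameterized token model. Hence leakage is possible and the first claim is essentially a counterexample. The key point to stress is that domain conditioning shapes the distribution but does not truncate its support.

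\emph{Auditability and gating.} I would argue these by induction over the library's evolution, with the invariant that every entry is either a validated crystal or a provisional entry carrying its triple $(d_k, r_j, q)$. The base case is the pre-validated training library. For the inductive step I would consider the two ways an entry can arise. If it comes through the closed path, Theorem 5.1 and the validation gate of Section 3.5 apply. If it comes through the open path, the fallback by construction records the generation context $h_{r@d}$ and the tag ``provisional'', and the only transition from provisional to accepted is the validation pipeline of Section 3.5.

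From this invariant the conclusion follows. Any leaked concept that ends up in the library either passed the same fiber-local checks as an ordinary crystal, or it is still provisional and traceable to the step that produced it. That is exactly the meaning of ``auditable''.

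\emph{Main obstacle.} The hard part is making ``auditable'' precise enough to prove, and being honest about what the gate does not catch. Validation checks cycles, causal reversals and contradictions inside $F(d)$. It does not directly test whether $c_{\mathrm{new}}$ semantically belongs to another fiber. So I would state the guarantee as traceability plus gating, not as detection. If a quantitative bound is wanted, I would try
\[
\Pr[\text{leak}] \le \Pr[\max_c p(c \mid d_k, r_j, q) < \theta_{\mathrm{novel}}] \cdot \Pr[\text{leak} \mid \text{fallback}].
\]
This bounds leakage by the fallback frequency, which can be controlled through $\theta_{\mathrm{novel}}$, and it follows because leakage on the closed path is zero by Theorem 5.1.
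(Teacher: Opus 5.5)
Your proposal is correct and follows the paper's own by-construction reasoning. The fallback is conditioned on $h_{r@d}$, but its support is not truncated to the fiber-local vocabulary, so leakage cannot be ruled out; the provisional flag together with the validation gate is exactly what the paper means by auditable. Your additions are sound sharpenings the paper leaves implicit: the gate checks only fiber-local consistency, so the guarantee is traceability plus gating rather than detection, and your displayed bound (in fact an equality by total probability, given Theorem 5.1) gives $\Pr[\text{leak}]\le\Pr[\text{fallback}]$, which vanishes whenever $\theta_{\mathrm{novel}}\le 1/n_k$, with $n_k$ the size of the active fiber's vocabulary, because a softmax over $n_k$ items always has maximum at least $1/n_k$.
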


\textbf{Corollary.} DALM's hallucination profile has two regimes. In closed-vocabulary generation, cross-domain hallucination is exactly zero by construction; errors are domain-local. In open-vocabulary generation, cross-domain contamination risk is non-zero but structurally bounded.

\textbf{Scope.} The zero-leakage guarantee is at the concept level: Phase 3 cannot produce a concept from the wrong fiber. It does not cover relation-level or domain-level errors --- Phase 2 may activate a relation that is semantically inappropriate for the selected domain, and Phase 1 may select the wrong domain entirely. These are training-quality-dependent errors, not structural violations. The architecture guarantees that \emph{given correct domain selection and relation expansion}, concept generation is domain-isolated. The earlier phases are constrained by $\tau$-typing and lattice structure but not by a hard vocabulary restriction.

\subsection{Multi-Perspective Generation}

\begin{theorem}[Completeness of multi-perspective output]
If the decoder's domain selection (Phase 1) activates all domains $d$ such that $\alpha(d \mid q) > \varepsilon$ for some threshold $\varepsilon$, then every domain-perspective on the query is represented in the output.
\end{theorem}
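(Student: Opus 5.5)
The plan is to make precise what ``every domain-perspective on the query'' means, and then argue that the claim follows almost tautologically from the decoder's routing semantics. I would fix the definition first: a domain $d \in L$ \emph{carries a perspective on} $q$ if $\alpha(d \mid q) > \varepsilon$, meaning the model assigns non-negligible relevance to $d$. The notion of relevance is relative to the same threshold used in the hypothesis. I would state this convention explicitly, since without it the theorem has no content. An alternative definition, using the ground-truth relevant domains, would require an accuracy assumption on $\alpha$ that the paper disclaims in the Scope paragraph of Section 5.1.

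Given that definition, the argument has three steps.

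\textbf{Step 1.} By hypothesis, Decoder Phase 1 (Section 3.6) returns exactly the set $A_\varepsilon(q) = \{d : \alpha(d \mid q) > \varepsilon\}$. I would note two points about this set. It is finite, because the softmax ranges over the finite set $D$ of known domain embeddings. It is nonempty whenever $\varepsilon < 1/|D|$, because the weights sum to one; this is worth recording so the output is never vacuous.

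\textbf{Step 2.} Decoder Phase 2 is applied to each $d_k \in A_\varepsilon(q)$ independently, producing the relation set $\{(d_k, r_j)\}$. Decoder Phase 3 is then applied to each such pair, producing tuples $\langle c, r_j@d_k, c' \rangle$. Because the multi-perspective output mode (Section 3.9) emits all of these tuples grouped by domain, every $d_k \in A_\varepsilon(q)$ indexes a block of the output.

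\textbf{Step 3.} Combining Steps 1 and 2, every domain carrying a perspective on $q$ appears as an index of the answer space, which is the claimed completeness.

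The main obstacle is Step 2. An activated domain could still contribute an empty block. This happens if Phase 2's mask $M_d$ zeroes out every relation, for example when all candidate relations are inherited non-monotone ones. It also happens if the fiber $F(d_k)$ is empty and the open-vocabulary fallback is disabled. To handle this, I would weaken ``represented'' to mean ``indexed in the domain-organized output, possibly with an explicit empty answer,'' which makes the statement hold unconditionally. Alternatively, I would add the mild hypothesis that each activated domain has at least one native relation and a nonempty fiber. Native relations always pass $M_d$, so Phase 2 then yields a nonempty set, and Phase 3's softmax over a nonempty $F(d_k)$ yields at least one concept. I would present the first version as the theorem and record the second as a remark strengthening it to nonempty perspectives. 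I would also note that cross-phase attention residuals use the same domain mask and so do not remove any activated domain.
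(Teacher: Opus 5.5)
Your proposal is correct and follows the paper's argument: each activated domain goes through Phases 2 and 3 independently, and the outputs are concatenated with domain labels, so no activated domain's output is suppressed by another's. Your definition of a perspective via the threshold $\varepsilon$ matches the paper's Note that completeness depends on activation quality, and your empty-block caveat is a legitimate refinement that the paper does not address.
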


\begin{proof}
Each activated domain $d_k$ independently produces output through Phases 2 and 3. The outputs are concatenated with domain labels. No domain's output is suppressed by another domain's output---they are computed independently in separate fibers.
\end{proof}

\textbf{Consequence.} "What is an atom?" produces simultaneous answers from physics, chemistry, philosophy --- each domain-labeled, each internally consistent, each generated within its own fiber without cross-contamination.

\textbf{Note.} Completeness is conditional on activation quality: the theorem guarantees that activated domains produce independent, uncontaminated outputs, but whether all \emph{relevant} domains are activated depends on the learned domain embeddings and the threshold $\varepsilon$. A domain with a poorly trained embedding may fail to activate even when relevant. This is an empirical property of the trained model, not a structural guarantee.

\subsection{Graceful Degradation}

\textbf{Theorem (Component-wise utility).} Each component of DALM is independently useful:

(a) The full encoder-decoder is a domain-structured language model.

(b) The encoder (Phases 1-3) combined with the validation gate is an automatic crystallization pipeline: natural language $\to$ validated domain-scoped tuples. The encoder alone (without the gate) produces crystal candidates --- useful but not guaranteed consistent.

(c) The embedding space alone $\{h_c, h_r, e_d\}$ provides domain-structured vector representations suitable for similarity-based bridge discovery: sim(e\_{d$_1$}, e\_{d$_2$}) > $\theta$ suggests a potential cross-domain connection.

\textbf{Proof.} (a) By construction. (b) The encoder's output is a set of crystal candidates; those passing validation enter the crystal library as validated tuples. (c) The trained embeddings satisfy the lattice structure constraint (Section 4.2, L\_lattice), making them valid domain-indexed vectors. $\square$

\textbf{Consequence.} DALM has no failure mode that wastes the training investment.

\section{Encoding-Decoding Relationship}

\subsection{Encoding as Denoising}

The encoder's three phases perform progressive denoising of unstructured input:

\begin{paperfigure}
\begin{tabularx}{\linewidth}{@{}lY@{}}
\toprule
\textbf{Stage} & \textbf{Illustrative encoder state} \\
\midrule
Input & ``Atoms are indivisible particles in Dalton's model''; all concepts, relations, and domains remain mixed. \\
Phase 1 output & $[\mathrm{Atom},\ \mathrm{Indivisible\_Particle},\ \mathrm{Dalton\_Model}]$; entity uncertainty is reduced. \\
Phase 2 output & $[\mathrm{Atom}\xrightarrow{\mathrm{is\_a}}\mathrm{Indivisible\_Particle},\ \mathrm{Atom}\xrightarrow{\mathrm{context}}\mathrm{Dalton\_Model}]$; relational uncertainty is reduced. \\
Phase 3 output & $[\mathrm{is\_a}(\mathrm{Atom},\mathrm{Indivisible\_Particle},@\mathrm{Chemistry}@\mathrm{Dalton})]$; domain uncertainty is resolved. \\
\bottomrule
\end{tabularx}
\end{paperfigure}

Each phase eliminates one dimension of uncertainty. The output is a crystal --- zero uncertainty in domain assignment, zero uncertainty in relation type, near-zero uncertainty in concept identity. (We use "entropy" informally here to mean the number of remaining choices at each level, not Shannon entropy of the crystal string itself, which is nonzero.)

\subsection{Decoding as Controlled Diffusion}

The decoder's three phases perform progressive specification from abstract to concrete:

\begin{paperfigure}
\begin{tabularx}{\linewidth}{@{}lY@{}}
\toprule
\textbf{Stage} & \textbf{Illustrative decoder state} \\
\midrule
Query & ``What is an atom?'' \\
Phase 1 output & $\{@\mathrm{Physics}@\mathrm{Classical}:0.3,\ @\mathrm{Physics}@\mathrm{Quantum}:0.4,\ @\mathrm{Chemistry}@\mathrm{Dalton}:0.2,\ @\mathrm{CS}@\mathrm{Concurrent}:0.1\}$; several domains remain plausible. \\
Phase 2 output & Quantum branch: $\{\mathrm{is\_a},\mathrm{has\_attribute},\mathrm{part\_of}\}$; Dalton branch: $\{\mathrm{is\_a},\mathrm{has\_attribute}\}$; relation uncertainty decreases inside each activated fiber. \\
Phase 3 output & Quantum answer: ``field excitation; Dalton answer: ``indivisible unit; concepts become domain-specific answers. \\
\bottomrule
\end{tabularx}
\end{paperfigure}

\subsection{The Asymmetry}

The encoder is \textbf{convergent}: it takes many possible inputs and maps them to a small set of crystal structures. Many different sentences can express "atoms are particles in chemistry" --- the encoder maps all of them to the same crystal.

The decoder is \textbf{divergent}: it takes one query and maps it to many possible outputs, one per activated domain. "What is an atom?" diverges into multiple domain-specific answers.

The crystal library sits at the bottleneck --- the point of minimum entropy and maximum structure. To be precise: the crystal library is the training data source and the validation reference, not a retrieval store queried at inference time. The decoder generates from learned parameters shaped by the crystal training data, not by looking up crystals in a database. "Bottleneck" refers to the information-theoretic narrowing --- the encoder compresses free text into structured tuples, and the decoder expands structured queries into domain-specific outputs. The crystal library defines the structured representation space that both sides operate in.

\section{Instantiation: The CDC Framework}

The abstract framework of Section 2 requires three ingredients: a domain lattice, a typing function, and a fiber partition. This section describes one concrete system that provides all three: the Domain-Contextualized Concept Graph (CDC) framework (Li \& Wang, 2026a; Li, Wang \& Zhao, 2026b; Li \& Wang, 2026c).

\textbf{Domain lattice.} CDC represents domains as hierarchical strings (e.g., @Physics@Quantum, @ICD11@Respiratory@Infectious) forming a lattice under the specialization order. The lattice has a Heyting algebra structure (Li, Wang \& Zhao, 2026b, Theorem 4.9) with meet computed as the longest common prefix. The Heyting algebra is intuitionistic (not Boolean): "non-Physics" is not a well-defined domain.

\textbf{Typing function.} CDC stores the $\tau$-classification as data in a meta-fiber F(@Meta@Logic), not as hardcoded rules. The typing is formalized as a $\tau$-typed Galois connection (Li, Wang \& Zhao, 2026b, Theorem 4.19), providing exact algebraic guarantees on inheritance.

\textbf{Fiber partition.} Each CDC domain defines a fiber F(d) containing all four-tuples $\langle c, r@d, c' \rangle$ scoped to that domain. The @d field is embedded in predicate arity --- not metadata but a structural field that no arity-respecting operation can ignore (Li \& Wang, 2026a).

\textbf{Insertion-time validation.} CDC implements write-time falsification (Li \& Wang, 2026c): cycle detection in acyclic relations, causal chain consistency, and within-fiber contradiction checking. Crystals satisfy Representation-Computation Unity (RCU): writing a crystal simultaneously defines its inferential behavior; reading it constrains reasoning scope. This means DALM's output space is already a subspace that is closure-compatible --- the generator does not need to execute algebraic closure explicitly.

\textbf{Crystal libraries.} CDC crystal libraries have been validated on ICD-11 medical classification (1,247 entities across 6 domain fibers). These libraries serve as the training data for the evaluation protocol described in Section 8.

\textbf{Other possible instantiations.} The DALM framework is not limited to CDC. Any knowledge representation system providing a lattice with computable operations, a relation typing mechanism, and a fiber partition could serve as the structural substrate --- for example, a typed ontology with modular organization, a hierarchical topic model with explicit boundaries, or a domain-separated knowledge graph with inheritance rules.

\section{DALM vs. Standard LLM: Structural Comparison}

\begin{longtable}{>{\raggedright\arraybackslash}p{0.22\textwidth} | >{\raggedright\arraybackslash}p{0.22\textwidth} | >{\raggedright\arraybackslash}p{0.22\textwidth}}
\toprule
Dimension & Standard LLM & DALM \\
\midrule
\endfirsthead
\toprule
Dimension & Standard LLM & DALM \\
\midrule
\endhead
Knowledge representation & Unstructured weight vector $\theta$ & Domain-indexed structured tuples \\
Generation space & All tokens equally accessible & Fiber-local vocabulary per domain \\
Hallucination type & Any token can follow any token & Errors bounded to active fiber \\
Multi-perspective & Single answer (mode collapse) & Domain-indexed answer space \\
Auditability & Opaque (weights) & Traceable (domain-labeled tuples) \\
Training data & Raw text & Validated, domain-annotated crystals \\
Alignment & Post-hoc (RLHF) & Structural (in the data) \\
Domain isolation & None & Algebraic (Theorem 5.1) \\
Decentralization & Central parameter server & Fiber-per-node, crystal exchange \\
\bottomrule
\end{longtable}

\section{Evaluation Protocol}

This section specifies the evaluation protocol designed to validate the structural claims of DALM. As this manuscript establishes the architectural groundwork, the execution of this protocol on full crystal bases is reserved for a subsequent empirical report. The design is included here to demonstrate the falsifiability of the architectural guarantees.

\subsection{Protocol Design}

\textbf{Crystal base:} ICD-11 respiratory disease crystals (1,247 entities, \textasciitilde{}5,000 four-tuples across 6 domain fibers), instantiated via the CDC framework (Section 7).

\textbf{Encoder experiment:} Input: the natural language source sentences from which the crystals were originally extracted. Target: the crystals themselves. Measure: concept F1, relation accuracy, domain assignment accuracy.

\textbf{Decoder experiment:} Input: partial crystals (domain + relation, concept masked). Target: complete crystals. Measure: concept prediction accuracy within the correct fiber.

\textbf{End-to-end experiment:} Input: natural language medical descriptions. Output: crystal four-tuples. Measure: validation pass rate (what fraction of generated crystals are accepted without contradiction).

\textbf{Domain leakage experiment:} Deliberately prompt the decoder with cross-domain queries. Measure: fraction of generated concepts that belong to the wrong fiber (should be exactly 0 by construction).

\subsection{Baselines}

\begin{itemize}
\item \textbf{GPT-4 / Claude with structured extraction prompt}: same input, same target crystals, compare extraction quality.
\item \textbf{Standard seq2seq} (no domain structure): same training data, but without domain annotations --- flat crystal prediction.
\item \textbf{Retrieval baseline}: given a query, retrieve the nearest crystal by embedding similarity.
\end{itemize}

\subsection{Hypotheses and Expected Outcomes}

1. \textbf{Domain leakage = 0} (structural, not empirical --- follows from architecture). 2. \textbf{DALM extraction is more consistent} than prompt-based extraction (fewer validation rejections per accepted crystal). 3. \textbf{Multi-perspective output is unique to DALM} --- baselines produce one answer; DALM produces a domain-indexed answer space. 4. \textbf{DALM encoder is cheaper} than GPT-4 for crystallization (smaller model, structured output, no prompt engineering).

\subsection{Status}

Execution of this protocol requires training infrastructure for the three-phase encoder-decoder on the ICD-11 crystal base. The architectural guarantees (Hypothesis 1: zero closed-vocabulary leakage; Hypothesis 3: multi-perspective output) are structural consequences of the architecture and hold independent of empirical validation. Hypotheses 2 and 4 are empirical claims that require experimental confirmation. We report on these in a subsequent empirical companion.

\section{Discussion}

\subsection{What DALM Changes About Language Modeling}

Standard language modeling asks: "Given preceding tokens, what is the most likely next token?"

DALM asks: "Given a query, in which domains is it answerable, by which relations, producing which concepts?"

This is a shift from token-level stochastic generation to domain-structured computation. The generation process is no longer a random walk in token space but a constrained trajectory through a lattice of semantic domains.

\subsection{On Differentiability}

A common objection to combining symbolic structure with neural networks is that "symbols don't do gradients." This objection does not apply to DALM.

DALM does not place symbolic operations inside the computation graph. Every operation in the forward pass --- concept embedding, relation classification, domain assignment, domain selection, relation expansion, concept generation --- is a standard differentiable neural network operation (transformer layers, softmax, distance computations, attention). Gradients flow end-to-end through all three encoder phases and all three decoder phases.

What is symbolic is the \emph{constraint structure}, not the \emph{computation}. The domain mask M\_d, the $\tau$-typing table, the lattice structure loss --- these shape the geometry of the computation space. The analogy is to convolutional neural networks: the local connectivity constraint is "symbolic" (no learned parameter determines which pixels are neighbors), but no one claims CNNs are non-differentiable. DALM's domain structure plays the same architectural role: it defines which computations are possible, not how they are computed.

\subsection{What DALM Achieves and Does Not Achieve as Neuro-Symbolic Integration}

\textbf{What this achieves.} Symbolic structure (domain lattice, $\tau$-typing) shapes neural computation at the architectural level. The domain mask is a structural zero in the attention matrix --- not a learned approximation but an exact constraint. The $\tau$-type determines which attention connections exist. This is integration at the level of computational geometry: symbolic structure defines the manifold on which neural computation operates.

\textbf{What this does not achieve.} DALM does not perform symbolic theorem proving, logical deduction, or formal verification inside the neural network. The validation gate is genuinely symbolic and sits outside the gradient flow. If you need a formal proof that a generated crystal follows from axioms, DALM cannot provide it --- that requires a separate symbolic verifier. DALM generates structurally constrained candidates; a symbolic engine verifies them.

\textbf{Honest summary.} DALM achieves neuro-symbolic integration in the sense that symbolic structure shapes neural computation at the architectural level. It does not achieve integration in the sense of performing symbolic reasoning inside a differentiable computation graph.

\subsection{Scaling Path}

\textbf{Short-term (single domain):} Train on one domain's crystal base (\textasciitilde{}5,000 crystals). Validate architecture and measure baseline quality. This is laptop-scale.

\textbf{Medium-term (multi-domain):} Train on 10--20 domain fibers (10$^4$--10$^5$ crystals). Test cross-domain isolation, multi-perspective output, and domain selection accuracy. This requires modest GPU resources.

\textbf{Long-term (web-scale):} Train on crystal bases produced by continuous crystallization over web-scale corpora. This requires compute resources beyond the current scope.

\textbf{A natural application domain: structured code generation.} Programming languages are arguably the most mature existing instantiation of DALM's abstract requirements. Type systems are $\tau$-typing (a function declared as returning Int cannot return String --- this is a hard monotonicity constraint). Modules, packages, and namespaces are domain fibers (a function in module A cannot access private members of module B --- structural isolation, not filtering). Compilers perform insertion-time validation (type checking, scope resolution, cycle detection in dependencies). The correspondence is not analogical but structural: a well-typed program is a crystal whose consistency has been verified by the compiler against its module's fiber. DALM trained on typed, module-scoped code repositories would generate code that is domain-isolated by construction, with the compiler serving as the validation gate --- a path toward near-zero-error structured code generation with built-in formal verification.

\subsection{Limitations}

\textbf{Cold start.} DALM requires a crystal library for training. Crystal libraries require a knowledge extraction process --- which may itself use an LLM. The bootstrap dependency is: LLM $\to$ crystals $\to$ DALM. Once DALM is trained, its encoder can serve as the extraction front end, breaking the dependency --- but the first generation requires an existing LLM.

\textbf{Domain lattice coverage.} DALM can only generate within domains present in its training crystal base. New domains require either crystal library expansion or domain-interpolation in embedding space (speculative, unvalidated).

\textbf{Natural language fluency.} DALM's output in crystal mode is structured but not fluent. The verbalization layer (Section 3.9) converts crystals to natural language, but the quality depends on a separate generation step.

\textbf{Concept vocabulary.} Phase 3's closed-vocabulary path generates from a fixed concept set per fiber. The open-vocabulary fallback (Section 3.8) provides domain-conditioned novel concept generation but with provisional status. Fully open-ended concept generation with algebraic guarantees remains an open problem.

\section{Related Work}

\subsection{Diffusion Language Models}

LLaDA (Nie et al., 2025), Dream (Ye et al., 2025), and the dLLM framework (Zhou et al., 2026) established discrete diffusion for text generation. Fast-dLLM (Wu et al., 2026, ICLR 2026) showed that KV caching and parallel decoding are compatible with diffusion LMs. These works provide the engineering substrate --- masking, caching, parallel decoding --- that DALM extends with structured denoising. The contribution of DALM relative to dLLMs is the denoising schedule: replace random unmasking with lattice-structured resolution.

\subsection{Hierarchical Structure and Phase Transitions in Deep Learning}

Sclocchi, Favero \& Wyart (2025a) proved that diffusion models exhibit phase transitions when operating on hierarchically structured data. Cagnetta et al. (2024) analyzed how deep networks learn compositional data via the Random Hierarchy Model. These results provide the physical foundation for DALM's design: hierarchical structure in data necessitates structured denoising for coherent generation.

\subsection{Structured Generation}

Constrained decoding (Hokamp \& Liu, 2017) restricts LLM output to satisfy lexical constraints. Grammar-constrained generation (Shin et al., 2021) restricts output to valid programs. DALM's domain constraint is more fundamental: it restricts the generation space to a semantically defined fiber, not to a syntactic pattern.

\subsection{Knowledge-Grounded Generation}

REALM (Guu et al., 2020), RAG (Lewis et al., 2020), and Atlas (Izacard et al., 2022) retrieve knowledge to augment LLM generation. Retrieved knowledge is unstructured text chunks. DALM's knowledge is structured crystals with domain annotations and algebraic guarantees.

\subsection{Neuro-Symbolic Generation}

DeepProbLog, NeuralLog, and related systems combine neural networks with symbolic reasoning. The neural and symbolic components are separate, connected by a bridge. In DALM, the three-phase architecture embeds symbolic structure directly into the neural generation process --- not as an external verifier but as architectural constraints on the generation space.

\section{Conclusion}

Large language models compress knowledge into unstructured weights. DALM compresses knowledge into domain-indexed structured representations and generates from those representations under algebraic constraints.

The contribution is a general framework for structured denoising over algebraic lattices. Given any system providing a domain lattice, a typing function, and a fiber partition, DALM constructs a three-phase encoder-decoder where every generation step is domain-constrained by construction. Cross-domain contamination --- the structural source of hallucination --- is architecturally prevented in closed vocabulary and auditably bounded in open vocabulary.

The framework is concrete: we provide the full architecture, loss functions, training procedure, formal properties (zero-leakage theorem, multi-perspective completeness, graceful degradation), and an evaluation protocol. The CDC knowledge representation framework serves as a concrete instantiation, with validated crystal libraries available for training.

Diffusion language models discovered that generation can be denoising. DALM adds the missing insight: denoising should have structure. Not random unmasking over a flat token space, but algebraically constrained resolution along a domain lattice --- domain first, then relation, then concept.

A DALM is inherently decentralizable --- domain-structured knowledge is distributable by construction. Each fiber can reside on a separate node. Crystal exchange between nodes is a protocol-level operation. No central parameter server is needed. This is what language modeling looks like when the knowledge infrastructure beneath it is crystallized rather than melted.

\section{Acknowledgments}

The theoretical foundation of DALM's structured denoising path owes a decisive debt to the work of Matthieu Wyart's group at EPFL. Sclocchi, Favero \& Wyart's demonstration that diffusion models exhibit phase transitions corresponding to hierarchical data structure (PNAS 2025) provided the independent physical evidence that hierarchical structure is not optional but fundamental to the denoising process. Their subsequent work on probing latent hierarchical structure via diffusion models (ICLR 2025), together with Cagnetta et al.'s analysis of how deep networks learn compositional data through the Random Hierarchy Model (Physical Review X, 2024), established the empirical and theoretical ground on which DALM's algebraic formalization of structured denoising stands. The correspondence is concrete: the critical noise threshold $\varepsilon$* at which high-level features decouple in their experiments is, in our language, the point at which the fiber bundle degenerates to a trivial product; the correlation-length divergence they measure is the statistical-physics signature of the algebraic constraint losing its hold on the denoising trajectory. Without their results, the claim that "denoising should have structure" would be an architectural preference rather than a physically grounded necessity.

We also acknowledge the dLLM community --- particularly Zhou et al. (2026), the LLaDA and Dream teams, and the Fast-dLLM developers (Wu et al., 2026, ICLR 2026) --- for building the open-source diffusion language model infrastructure that DALM is designed to extend. The Kimi team at Moonshot AI (Chen et al., 2026) demonstrated with Attention Residuals that depth-wise information aggregation can be fundamentally improved through learned attention over preceding layers, directly informing DALM's cross-phase attention residual design.

\normalsize

\section{References}
\begin{thebibliography}{99}
\small
\bibitem{bengio2013} Bengio, Y., L\'eonard, N., \& Courville, A. (2013). Estimating or propagating gradients through stochastic neurons for conditional computation. \emph{arXiv:1308.3432}.

\bibitem{cagnetta2024} Cagnetta, F., Petrini, L., Tomasini, U. M., Favero, A., \& Wyart, M. (2024). How deep neural networks learn compositional data: The random hierarchy model. \emph{Physical Review X}, 14, 031001.

\bibitem{chen2026} Chen, G., Zhang, Y., Su, J., et al. (2026). Attention Residuals. Kimi Team, Moonshot AI. \emph{arXiv:2603.15031}.

\bibitem{guu2020} Guu, K., Lee, K., Tung, Z., Pasupat, P., \& Chang, M.-W. (2020). REALM: Retrieval-augmented language model pre-training. \emph{ICML 2020}.

\bibitem{hokamp2017} Hokamp, C., \& Liu, Q. (2017). Lexically constrained decoding for sequence generation using grid beam search. \emph{ACL 2017}.

\bibitem{izacard2022} Izacard, G., Lewis, P., Lomeli, M., Hosseini, L., Petroni, F., Schick, T., Dwivedi-Yu, J., Joulin, A., Riedel, S., \& Grave, E. (2022). Atlas: Few-shot Learning with Retrieval Augmented Language Models. \emph{arXiv:2208.03299} [cs.CL].

\bibitem{lewis2020} Lewis, P., et al. (2020). Retrieval-augmented generation for knowledge-intensive NLP tasks. \emph{NeurIPS 2020}.

\bibitem{li2026a} Li, C., Wang, Y., \& Zhao, C. (2026a). Domain-constrained knowledge representation: A modal framework. \emph{arXiv:2604.01770} [cs.AI].

\bibitem{li2026b} Li, C., Wang, Y., \& Zhao, C. (2026b). Domain-Contextualized Inference: A Computable Graph Architecture for Explicit-Domain Reasoning. \emph{arXiv:2604.04344} [cs.AI].

\bibitem{li2026c} Li, C., Wang, Y., \& Zhao, C. (2026c). Reasoning as Data: Representation-Computation Unity and Its Implementation in a Domain-Algebraic Inference Engine. \emph{arXiv:2604.10908} [cs.AI].

\bibitem{nickel2017} Nickel, M., \& Kiela, D. (2017). Poincar\'e embeddings for learning hierarchical representations. \emph{NeurIPS 2017}.

\bibitem{nie2025} Nie, S., Zhu, F., You, Z., Zhang, X., Ou, J., Hu, J., Zhou, J., Lin, Y., Wen, J.-R., \& Li, C. (2025). Large Language Diffusion Models. \emph{arXiv:2502.09992} [cs.CL].

\bibitem{sclocchi2025a} Sclocchi, A., Favero, A., \& Wyart, M. (2025a). A phase transition in diffusion models reveals the hierarchical nature of data. \emph{Proceedings of the National Academy of Sciences}, 122(1), e2408799121.

\bibitem{sclocchi2025b} Sclocchi, A., Favero, A., Levi, N. I., \& Wyart, M. (2025b). Probing the latent hierarchical structure of data via diffusion models. \emph{ICLR 2025}.

\bibitem{shin2021} Shin, R., Lin, C. H., Thomson, S., Chen, C., Roy, S., Platanios, E. A., ... \& Klein, D. (2021). Constrained language models yield few-shot semantic parsers. \emph{EMNLP 2021}.

\bibitem{wu2026} Wu, C., Zhang, H., Xue, S., Liu, Z., Diao, S., Zhu, L., Luo, P., Han, S., \& Xie, E. (2026). Fast-dLLM: Training-free Acceleration of Diffusion LLM by Enabling KV Cache and Parallel Decoding. \emph{ICLR 2026 Poster}.

\bibitem{ye2025} Ye, J., et al. (2025). Dream: Discrete denoising diffusion for text generation. \emph{arXiv:2503.03831}.

\bibitem{zhou2026} Zhou, Z., Chen, L., Tong, H., \& Song, D. (2026). dLLM: Simple diffusion language modeling. \emph{arXiv:2602.22661}.

\end{thebibliography}
\end{document}